\newtheorem{theorem}{Theorem}
\newtheorem{definition}{Definition}
\newtheorem{prop}{Proposition}
\newtheorem{Rem}{Remark}
\begin{document}
%
%
\title{DP-CDA: An Algorithm for Enhanced Privacy Preservation in Dataset Synthesis Through Randomized Mixing}
%
%

%
%
\author[1,$\ddagger$]{Utsab Saha\orcidlink{0000-0003-2106-8648}}
\author[1,$\ddagger$]{Tanvir Muntakim Tonoy\orcidlink{0009-0005-9915-5014}}
\author[1,*]{Hafiz Imtiaz\orcidlink{0000-0002-2042-5941}}

\affil[1]{\small{Department of Electrical and Electronic Engineering\\
Bangladesh University of Engineering Technology, 
Dhaka 1205, Bangladesh}}
\affil[$\ddagger$]{\small{\it{These authors contributed equally to this work}}}
\affil[*]{\small{\it{hafizimtiaz@eee.buet.ac.bd}}}

%
%

\date{}
\maketitle
\begingroup
\renewcommand\thefootnote{}\footnote{This manuscript has been published in the \textit{SECURITY AND PRIVACY} by \textbf{Wiley}.}\addtocounter{footnote}{-1}
\endgroup

\vspace{-45 pt}
\sloppy
%
%
\begin{abstract}
In recent years, the growth of data across various sectors, including healthcare, security, finance, and education, has created significant opportunities for analysis and informed decision-making. However, these datasets often contain sensitive and personal information, which raises serious privacy concerns. It has been shown in multiple works that a person's identity is intertwined with their data, even if the data is anonymized. Due to this lack of separation between a person's identity and their information, the patterns associated with an individual's information can uniquely identify them. Protecting individual privacy is crucial, yet many existing machine learning and data publishing algorithms struggle with high-dimensional data, facing challenges related to the trade-off between computational efficiency and privacy. To address these challenges, we introduce an effective data publishing algorithm \emph{DP-CDA}. Our proposed algorithm generates synthetic data by randomly mixing the privacy-sensitive data in a class-specific manner and inducing carefully tuned randomness to ensure formal privacy guarantees. Our comprehensive privacy accounting shows that the proposed DP-CDA provides a stronger privacy guarantee compared to existing methods, allowing for better utility while maintaining a stricter level of privacy. To evaluate the effectiveness of DP-CDA, we examine the accuracy of predictive models trained on the synthetic data, which serves as a measure of dataset utility. Importantly, we identify an optimal order of mixing that balances privacy-utility trade-off. Our results indicate that synthetic datasets produced using the DP-CDA can achieve superior utility compared to those generated by conventional data publishing algorithms, even when subject to the same privacy requirements.

\end{abstract}

\noindent\textbf{\textit{Keywords}:} Differential privacy, Random mixing, Deep learning, Dataset synthesis, Private datasets.

%
%
%
%
\section{Introduction}
\label{sec:intro}
In the realm of machine learning (ML) or deep learning (DL) applications, it is widely acknowledged that the dataset is of utmost importance, since it dictates the performance of the model. Oftentimes, the dataset contains sensitive information, such as medical records, personal photos/information, or proprietary data. Organizations, including private companies, government agencies, and hospitals, routinely handle substantial volumes of sensitive personal information about their clients, customers, or patients. This underscores the importance of implementing systems and methodologies that can effectively analyze the data while preserving privacy and ensuring the confidentiality of individuals. Using standard ML techniques to train a model can potentially lead to data breaches~\cite{shokri2017membership}. It should be noted here that only anonymization can not provide any privacy in the presence of auxiliary information~\cite{imtiaz2021cape}. More specifically, it is important to be aware of ``composition attacks'', where malicious actors can combine their algorithm outputs with other information to identify individuals in the dataset~\cite{ganta2008composition}. For instance, an adversary may have the opportunity to access publicly available records, including voting polls~\cite{sweeney2002k}. This is due to the lack of separation between an individual's identity and their data, as the patterns associated with a person's information can be uniquely identifying~\cite{sarwate2013signal}. It highlights the importance of considering data privacy and security measures to protect the identities of the individuals. A prudent approach involves considering the creation of a ``distilled version'' of the original sensitive dataset and subsequently, training the ML model exclusively using that version to protect the sensitive information~\cite{fung2010privacy, zhu2017differentially, fukuchi2017differentially}.

A mathematically rigorous and cryptographically motivated approach for preserving privacy, differential privacy, has gained significant attention among researchers, corporations, and users alike~\cite{dwork2006calibrating}. It quantifies the risk of privacy breaches using a parameter $\epsilon$, which governs the extent to which the outcome of a privacy-preserving algorithm can vary when an individual's data is included or excluded from the dataset. A smaller $\epsilon$ reduces the ability of adversaries to infer information about individuals in the dataset, enhancing overall privacy protection. The concept of differential privacy (DP), introduced in~\cite{dwork2008differential}, has gained widespread acceptance in various fields, including healthcare~\cite{liu2024survey, fuladi2025reliable}, ML, or DL applications~\cite{pan2024differential, demelius2025recent}, recommender systems~\cite{fang2022differentially, deng2025differentially}, etc. To ensure DP for an ML model training, some form of randomness is introduced into the training algorithm pipeline. This evidently affects the model's utility, giving raise to a \emph{privacy-utility trade-off}. 

In addition to differentially-private model training, synthesizing data that preserves privacy and the ``structures or geometry'' of the original sensitive data has been investigated. Such synthetic data is used for ML model training, and the resulting model is provably differentially private~\cite{dwork2008differential}. While several methods for differentially private data publishing have been proposed for ML and DL applications, many are not suitable for high-dimensional data due to significant computational demands and utility penalties~\cite{lee2019synthesizing}. These algorithms can be categorized as: (i) Local perturbation algorithms, (ii) Differentially private machine learning algorithms, (iii) Mixing algorithms, and (iv) Generative Adversarial Network (GAN)-based algorithms.

Local perturbation is a straightforward approach that adds noise to the data. However, to achieve a meaningful level of privacy, a large amount of additive noise is required, making it challenging to achieve a good privacy-utility trade-off.
On the other hand, mixing algorithms avoid many of the issues. However, a common problem with previous approaches is the implementation of noiseless mixing, which tends to result in weaker privacy guarantees. In this context, the work of K. Lee et al.~\cite{lee2019synthesizing} is particularly relevant as it was one of the first to demonstrate the effectiveness of high-order mixing with additive noise for privacy-preserving synthetic data generation. While their approach established a solid foundation, its privacy bound remains dependent on the data dimension. This emphasizes the importance of developing more efficient algorithms capable of handling large, complex datasets while still ensuring strict privacy protection.

\begin{figure}[t]
    \centering
    \includegraphics[width=0.8\textwidth]{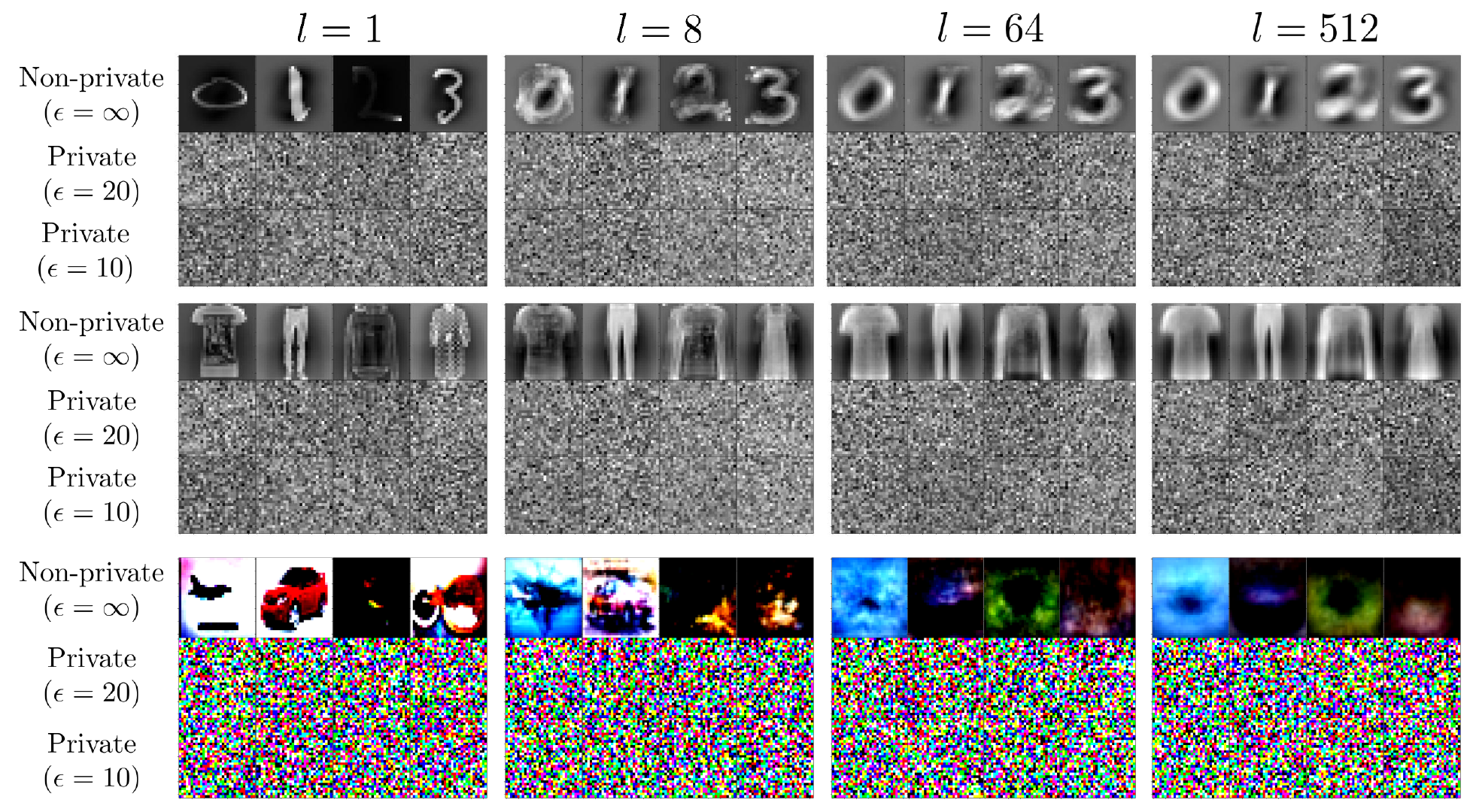}
    \caption{Differentially private datasets generated from the MNIST, FashionMNIST, and CIFAR10 datasets, each incorporating varying order of mixture ($l$). The parameter $\epsilon$ refers to the DP level; a lower value of $\epsilon$ signifies a greater degree of privacy protection.}
    \label{fig:fig1}
\end{figure}

Motivated by this, we present a synthetic privacy-preserving data publishing algorithm -- \textit{Differentially Private Class-Centric Data Aggregation (DP-CDA)}. More specifically, we (uniformly) randomly select $l$ data samples from a specific class for mixing and then induce Gaussian noise (parameterized by variance terms $\sigma_x$, $\sigma_y$) to generate each synthetic data sample. This distilled dataset is subsequently utilized in the conventional training process of ML and DL frameworks. A few samples of the synthetic data generated by the proposed algorithm for different privacy levels are shown in Figure~\ref{fig:fig1}, whereas detailed results are presented in Section~\ref{sec:res}. We summarize our main contributions below:
\begin{figure}[H]
    \centering
        \includegraphics[width =0.5\linewidth]{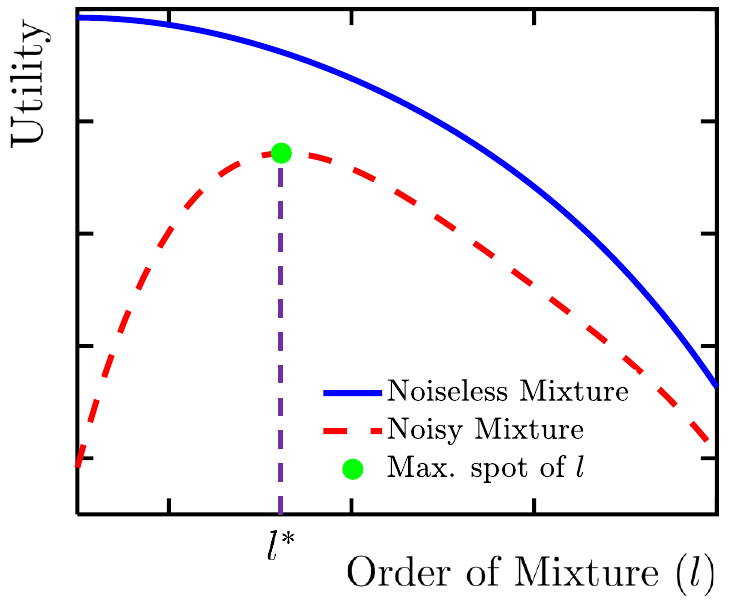}
    \caption{Utility as a function of the order of mixture $l$.}
    \label{fig:fig2}
\end{figure}
\begin{itemize}
    \item We analyze the synthetic data generation algorithm to achieve a tighter overall privacy guarantee compared to existing works.
    \item We investigate the effect of order of mixture \( l \) on the performance of the ML model trained on the generated synthetic data. More specifically, we investigated the performance of our proposed approach on four real datasets (among which three are image datasets and one is tabular dataset). We empirically demonstrate that for a given dataset and privacy level, there exists an \emph{optimal} $l^*$ for which the model performance peaks, as illustrated in the toy plot in Figure~\ref{fig:fig2}.
    \item Through our tighter theoretical analysis and experimental validation, we demonstrate that the synthetic data generation algorithm can achieve a given level of privacy while ensuring better utility compared to existing approaches.
\end{itemize}
The remainder of this paper is organized as follows. In Section~\ref{sec:rel_work}, we review relevant literature and highlight key developments in the field. In Section~\ref{sec:Definitions}, we outline the necessary background and formally define the problem setting. We describe the proposed algorithm in detail in Section~\ref{sec:proposed-method}. In Section~\ref{sec:res}, we present the experimental results to demonstrate the effectiveness of the proposed approach. In Section~\ref{sec:limit}, we discuss the limitations of our method and outline potential directions for future work. Finally, in Section~\ref{sec:conc}, we conclude the paper with a summary of our main contributions.

\section{Related Works}
\label{sec:rel_work}
Over the years, for publishing differentially private data, numerous algorithms have been developed~\cite{xiao2010differential, zhang2017privbayes}. For a comprehensive analysis of these algorithms, we refer the reader to the works of Zhu et al.~\cite{zhu2017differentially}. Nevertheless, these algorithms face challenges in scaling with larger data sets -- the computational complexity increases significantly with high-dimensional data, often exhibiting exponential growth~\cite{lee2019synthesizing}. As a result, many of these algorithms can become impractical for deep learning applications, particularly those that involve high-dimensional datasets. Local Perturbation is a widely adopted algorithm for data publishing within the framework of DP~\cite{agrawal2000privacy, agrawal2005privacy, agrawal2005framework, mishra2006privacy}. This technique involves the addition of noise from a specific distribution to data points, allowing the generated data to remain consistent with the original data domain. This compatibility makes it suitable for use with existing learning algorithms. However, to ensure meaningful privacy, noise with high variance must be introduced, which adversely affects model performance. In contrast, random projection reduces high-dimensional data to a lower-dimensional subspace prior to inducing noise~\cite{kenthapadi2012privacy, xu2017dppro}. This method results in a new data domain, which can limit the applicability of models that are specific to a given domain.  

There is a considerable amount of work on the application of differentially private algorithms to various machine learning problems, including regression~\cite{zhang2012functional, chaudhuri2008privacy, zheng2017collect}, online learning~\cite{agarwal2017price}, graphical models~\cite{bernstein2017differentially}, empirical risk minimization~\cite{chaudhuri2011differentially, song2013stochastic, bassily2014private, chaudhuri2011differentially, imtiaz2021cape, naima2023fmdp}, and deep learning~\cite{abadi2017protection, abadi2016deep, papernot2016semi}. The objectives of these algorithms differ slightly from those of the data publishing algorithms mentioned earlier. While differentially private data publishing algorithms focus on designing synthetic datasets at the input end of the pipeline, differentially private algorithms aim to create learning algorithms that yield differentially private models at the output end. Any model trained on a dataset that is differentially private will also maintain DP~\cite{dwork2008differential}; the goals of these learning algorithms represent a more relaxed version of the data publishing algorithms. Unfortunately, these differentially private algorithms do not always guarantee data privacy in adverse scenarios. If an attacker gains access to the input side of the pipeline, the entire algorithm may be compromised, thus undermining the security of user data.

In the past few years, several methods have been proposed for differentially private data publishing algorithms using mixtures. Karakus et al. illustrated that it is possible to effectively train a straightforward linear model by utilizing mixtures that are free from noise~\cite{karakus2017straggler}. The exploration of learning nonlinear models with mixtures has also been addressed in previous studies~\cite{tokozume2017learning, tokozume2018between, zhang2018mixup, inoue2018data, lee2018sgd}. For instance, Tokozune et al. demonstrated that sound recognition models can be trained using mixtures of audio signals~\cite{tokozume2017learning}. Similarly, other studies have indicated that it is possible to train an image classification model using image mixtures~\cite{tokozume2018between, zhang2018mixup, inoue2018data}. However, these studies primarily focused on noiseless mixtures involving only two or three data points, whereas Lee et al. demonstrated how to train deep neural networks with a high order of mixture, incorporating added noise to enhance privacy~\cite{lee2019synthesizing}. 

Recently, Zhang et al. introduced a differentially-private data synthesis algorithm with correlation based on latent factor models, which proves to be efficient for mixed-type data~\cite{zhang2024differentially}. Additionally, data publishing algorithms have been automated by incorporating a deep learning framework~\cite{kumar2024differential}. More specifically, the integration of GAN for producing differentially private datasets has gained much popularity~\cite{cao2021don, xu2019ganobfuscator}. Although GAN-generated synthetic images offer DP guarantees and maintain the utility of the original private images, they often appear visually similar to those private images. This similarity poses a challenge in fulfilling a key privacy requirement -- visual dissimilarity. Furthermore, training a GAN can be difficult due to vanishing gradients, mode collapse, training instability, and convergence failures. 

\begin{table}[t]
\centering
\caption{Comparison of existing differentially private data publishing methods}
\label{tab:dp_comparison}
\resizebox{1.0\textwidth}{!}{
\begin{tabular}{p{3.2cm} p{2.3cm} p{3cm} p{2.1cm} p{2cm} p{1.7cm} p{2.2cm}}
\hline
\hline
\textbf{Method} & \textbf{Approach Type} & \textbf{Noise Addition Stage} & \textbf{Utility on High-Dim Data} & \textbf{Visual Dissimilarity} & \textbf{Scalability} & \textbf{Decentralized Support} \\
\hline
Local Perturbation~\cite{agrawal2000privacy} & Input Perturbation & Directly on original data & Low & Preserved & High & Limited \\

Random Projection~\cite{xu2017dppro} & Input Perturbation & After dimensionality reduction & Low & Low & Moderate & Limited \\

DP-SGD~\cite{abadi2016deep} & Output Perturbation & On gradient updates & High & Not applicable & High & Yes \\

GAN-based~\cite{cao2021don, xu2019ganobfuscator} & Generative Model & Internal in GAN training & High & Low & Moderate & Possible \\

DP-Mix~\cite{lee2019synthesizing} & Data Mixing & On mixed samples & Moderate to High & Moderate & High & Limited \\

DP-CDA (Proposed) & Data Mixing & On mixed samples & High & High & High & Yes \\
\hline
\end{tabular}
}
\end{table}

It is evident from existing works that mixing algorithms for differentially private data publishing are particularly effective in balancing computational efficiency and privacy preservation. Motivated by this, we focus on enhancing existing mixing-based approaches to address their limitations. More specifically, we consider the problem of releasing synthetic data for training an image classification model while ensuring $(\epsilon, \delta)$-DP guarantees and providing utility close to that of conventional DP training. We note that the proposed synthetic data-generating mechanism is computationally simple and can be adapted for high-dimensional and decentralized data. Furthermore, we demonstrate that our method necessitates considerably less noise addition compared to existing mixing-based data publishing techniques. As a result, our proposed approach effectively preserves the utility of the released data while ensuring robust privacy protection. 
To contextualize our work among various methods for publishing private data, we provide a structured comparison in Table~\ref{tab:dp_comparison}. Here, we compare the proposed DP-CDA algorithm to existing representative approaches -- including input/output perturbation, random projection, GAN-based synthesis, and earlier mixing-based techniques, on key factors such as approach type, noise injection process, utility support for high-dimensional data, visual dissimilarity, scalability, and decentralized support. While previous approaches frequently struggled to being computationally light-weight, provide strong privacy guarantees with high utility, scalability, and support operation in decentralized settings, the proposed DP-CDA achieves an excellent balance across all of these dimensions. This clearly demonstrates practical suitability for scalable and privacy-preserving synthetic data generation.

\section{Background and Problem Formulation}\label{sec:Definitions}
\textbf{Notation. }We use lower-case bold-faced letters (e.g., $\mathbf{x}$) for vectors, upper-case bold-faced letters (e.g., $\mathbf{X}$) for matrices, and unbolded letters (e.g., $N$ or $n$) for scalars. The set of $n$ data points $(\mathbf{x}_i, y_i)$ is represented as $\mathbb{D} = \{(\mathbf{x}_1, y_1), (\mathbf{x}_2, y_2), \ldots, (\mathbf{x}_n, y_n)\} \triangleq \{ x_i, y_i \}_{i=1}^n$. We use $\| \cdot \|_2$ and $\| \cdot \|_F$ to denote the $\ell_2$-norm of a vector and the Frobenius norm of a matrix, respectively.\\

We reviewed some definitions, theorems, and propositions, which are necessary for our problem formulation, according to~\cite{imtiaz2021cape, swapnil2023, naima2023fmdp} in \ref{sec:appen}.\\

\noindent\textbf{Problem Formulation. }As mentioned before, numerous modern ML algorithms are trained on privacy-sensitive data, and the model parameters can cause significant privacy leakage. In this work, we focus on generating synthetic data that i) satisfies formal and strict DP guarantees; and ii) can be utilized to train ML models. More specifically, given a dataset $\mathbb{D} = \{ (\mathbf{x}_i, \mathbf{y}_i) \}_{i=1}^n$, where each $\mathbf{x}_i \in \mathbb{R}^{d_x}$ is a feature vector and $\mathbf{y}_i \in \mathbb{R}^{d_y}$ is the corresponding label, the goal is to construct a new dataset $\mathbb{D}'$ that ensures privacy, while maintaining high utility for downstream ML tasks. 
The new dataset $\mathbb{D}'$ should enable effective training of a machine-learning model $f(\mathbf{w})$, yielding predictive performance comparable to that achieved with the original dataset $\mathbb{D}$. This requirement can be formulated as a utility constraint: 
$$\text{Utility}(f, \mathbb{D}') \geq \theta \text{Utility}(f, \mathbb{D}),$$ 
where, $\text{Utility}(f, \mathbb{D}')$ is a metric (e.g., accuracy) that reflects the performance of model $f(\mathbf{w})$ when trained on $\mathbb{D}'$, and $\theta \in (0,1]$ is a predefined utility threshold indicating acceptable performance relative to training on $\mathbb{D}$. As mentioned before, our objective of this work is to design a transformation mechanism $\mathcal{M}$ that maximizes privacy protection, while simultaneously meeting the utility constraint for effective model performance. This can be expressed as:
\begin{align*}
    &\text{minimize} \quad \epsilon \\ 
    &\text{subject to} \quad \text{Utility}(f, \mathbb{D}') \geq \theta \text{Utility}(f, \mathbb{D}).
\end{align*}

\section{Proposed Privacy-Preserving Synthetic Data Generation Algorithm (DP-CDA)}\label{sec:proposed-method}
In the following, we describe our proposed DP-CDA -- a privacy-preserving synthetic data generation algorithm, in detail with a similar setup as~\cite{lee2019synthesizing}. Let us assume we have a dataset $\mathbb{D} = \{ (\mathbf{x}_i, y_i) \}_{i=1}^N$. We can form the \emph{design matrix} $\hat{\mathbf{X}} \in \mathbb{R}^{N \times d_x}$ and $\mathbf{y} \in \mathbb{R}^{N}$, where the $i$-th sample $\mathbf{x}_i \in \mathbb{R}^{d_x}$ is the $i$-th row in $\hat{\mathbf{X}}$, and $y_i \in \{1, 2, 3, ..., K\}$ is the corresponding class label. We intend to synthesize $T$ new data points by mixing $l$ data points from the same class. We first perform a feature-wise $z$-score normalization of the data points $\mathbf{x}_i$ as
\[
x_{ij} = \frac{x_{ij} - \mu_j}{\sigma_j} \quad \forall j \in \{1, 2, \ldots, d_x\},
\]
where $x_{ij}$ is the $j$-th entry of the $d_x$-dimensional feature vector $\mathbf{x}_i$, $\mu_j = \frac{1}{N} \sum_{i=1}^{N} x_{ij}$, and $\sigma_j = \sqrt{\frac{1}{N} \sum_{i=1}^{N} (x_{ij} - \mu_j)^2}$. Next, we normalize each sample $\mathbf{x}_i$ with respect to its \( \ell_2 \)-norm as follows:
\begin{align*}
\mathbf{x}_i = \frac{\mathbf{x}_i}{\max(1, \|\mathbf{x}_i\|_2/c)},
\end{align*}
where $c$ is the $\ell_2$ norm clipping parameter~\cite{abadi2016deep}. We denote the design matrix with the preprocessed data samples with $\mathbf{X}$. Note that, the labels $y_i$ are converted to one-hot encoded vectors, and the labels matrix is denoted as $\mathbf{Y} \in \mathbb{R}^{N \times K}$. Our goal now is to generate a synthetic dataset $\tilde{\mathbf{X}} \in \mathbb{R}^{T \times d_{x}}$ and corresponding one-hot-encoded labels $\tilde{\mathbf{Y}} \in \mathbb{R}^{T \times K}$.
Since there are $K$ classes, we intend to ensure that each class of the synthetic dataset will have approximately $T_k = \left\lfloor \frac{T}{K} \right\rfloor$ synthetic samples. 
For each class $k \in \{1, 2, \dots, K\}$, we uniformly randomly select $l$ data points (\emph{order of mixture}) without replacement from the subset $\tilde{\mathbf{X}}_k = \left\{ \mathbf{x}_i : y_i=k \right\}$. Let $\left\{ \mathbf{x}_{i_{1}}, \mathbf{x}_{i_{2}}, ..., \mathbf{x}_{i_{l}} \right\}$ be the randomly selected data points from $\mathbf{X}_k$. A synthetic sample $\tilde{\mathbf{x}}_t^{(k)}$ is generated as:
$$\tilde{\mathbf{x}}_t^{(k)} = \frac{1}{l} \sum_{j=1}^{l} \mathbf{x}_{i_j} + \mathbf{n}_x,\ \forall t \in \{1, 2, \ldots, T_k\},$$ 
where $\mathbf{n}_x \sim \mathcal{N}(0, \sigma_x^2 \mathbf{I}_{d_x})$ is a $d_x$-dimensional Gaussian random variable with covariance matrix $\sigma_x^2 \mathbf{I}_{d_x}$, and $\mathbf{I}_{d_x}$ is a $d_x \times d_x$ identity matrix.

Similarly, the corresponding synthetic label $\tilde{y}_t^{(k)}$ is computed in two steps. First, the mean of the corresponding one-hot encoded labels of the $l$ randomly selected data points are computed, and noise drawn from a Gaussian distribution is added to it as follows:
$$\tilde{y}_t^{\text{one-hot(k)}} = \frac{1}{l} \sum_{j=1}^{l} y_{i_j}^{\text{one-hot}} + \mathbf{n}_y,$$ 
where $\mathbf{n}_y \sim \mathcal{N}(0, \sigma_y^2 \mathbf{I}_K)$ is a $K$-dimensional Gaussian random variable with covariance matrix $\sigma_y^2 \mathbf{I}_K$, $\mathbf{I}_K$ is a $K \times K$ identity matrix, and $y_{i_j}^{\text{one-hot}}$ is the one-hot-encoded label vector corresponding to $\mathbf{x}_{i_j}$. Then these one-hot encoded labels are converted to integer labels as follows:
\begin{equation}
\tilde{y}_{\text{t}}^{\text{(k)}} = \arg\max_{i \in \{0, 1, \dots, K-1\}} \tilde{y}_{\text{t}}^{\text{one-hot(k)}}[i].
\end{equation}
\begin{figure}[H]
    \centering
    \includegraphics[width=0.9\textwidth]{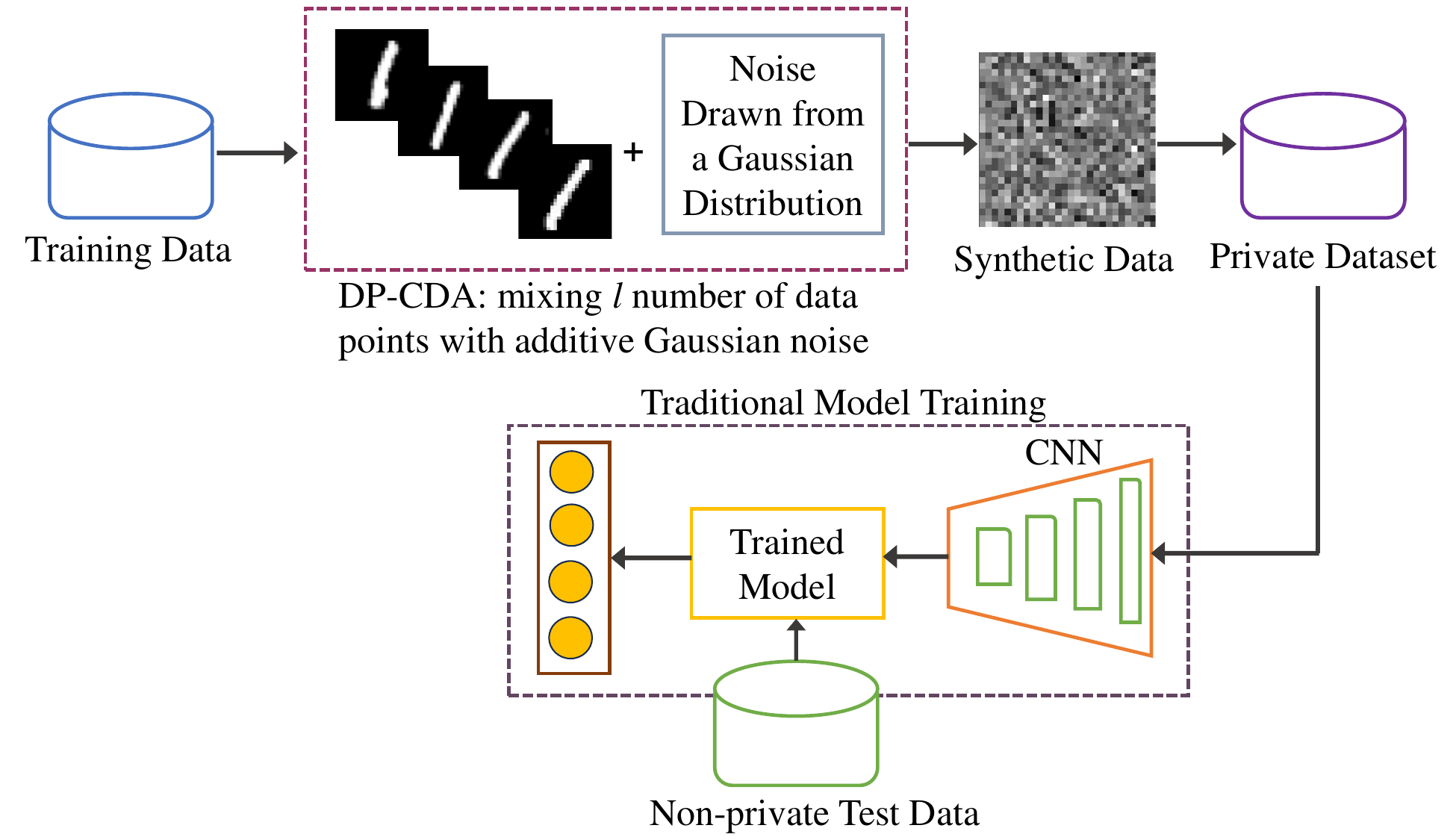}
    \caption{Flowchart of proposed synthetic data generation and subsequent model training}
    \label{fig:fig4}
\end{figure}
This process is repeated $T_k$ times for each class $k$ to form a class-balanced synthetic dataset $(\tilde{\mathbf{X}}, \tilde{\mathbf{y}})$. The complete synthetic dataset generation algorithm is shown in Algorithm~\ref{alg:dpcda}, and Figure~\ref{fig:fig4}. It is easy to show that the computational complexity of the proposed algorithm is $\mathcal{O}(Tld_x)$.

\begin{algorithm}[t]
\caption{DP-CDA: Privacy-Preserving Synthetic Data Generation}
\label{alg:dpcda}
\begin{algorithmic}[1]
\REQUIRE Dataset $\mathbb{D} = \{ (\mathbf{x}_i, y_i) \}_{i=1}^N$, number of synthetic samples $T$, order of mixture $l$, $\ell_2$ norm clipping parameter $c$, noise parameters $\sigma_x$ and $\sigma_y$
\ENSURE Synthetic dataset $(\tilde{\mathbf{X}}, \tilde{\mathbf{Y}})$
\STATE Formulate the design matrix $\hat{\mathbf{X}} \in \mathbb{R}^{N \times d_x}$ and label vector $\mathbf{y} \in \mathbb{R}^{N}$
\FOR{each feature $j \in \{1, 2, \ldots, d_x\}$}
    \STATE Compute $\mu_j = \frac{1}{N} \sum_{i=1}^{N} x_{ij}$ and $\sigma_j = \sqrt{\frac{1}{N} \sum_{i=1}^{N} (x_{ij} - \mu_j)^2}$
    \STATE Compute $x_{ij} = \frac{x_{ij} - \mu_j}{\sigma_j}$ $\forall i \in \{1, 2, \ldots, N\}$
\ENDFOR
\STATE Normalize $\mathbf{x}_i = \frac{\mathbf{x}_i}{\max(1, \|\mathbf{x}_i\|_2/c)}$ $\forall i \in \{1, 2, \ldots, N\}$
\STATE Convert $\mathbf{y}$ to one-hot encoded vectors $\mathbf{Y} \in \mathbb{R}^{N \times K}$, and set $T_k = \left\lfloor \frac{T}{K} \right\rfloor$
\FOR{each class $k \in \{1, 2, \ldots, K\}$}
    \FOR{each synthetic sample $t \in \{1, 2, \ldots, T_k\}$}
        \STATE Uniformly randomly select $l$ data points $\{\mathbf{x}_{i_1}, \mathbf{x}_{i_2}, \ldots, \mathbf{x}_{i_l}\}$ from $\{\mathbf{x}_i : y_i = k\}$\label{alg:dpcda:random-selection}
        \STATE Compute $\tilde{\mathbf{x}}_t^{(k)} = \frac{1}{l} \sum_{j=1}^{l} \mathbf{x}_{i_j} + \mathbf{n}_x$, where $\mathbf{n}_x \sim \mathcal{N}(0, \sigma_x^2 \mathbf{I}_{d_x})$ \label{alg:dpcda:noisex}
        \STATE Compute $\tilde{y}_t^{\text{one-hot(k)}} = \frac{1}{l} \sum_{j=1}^{l} y_{i_j}^{\text{one-hot}} + \mathbf{n}_y$, where $\mathbf{n}_y \sim \mathcal{N}(0, \sigma_y^2 \mathbf{I}_K)$\label{alg:dpcda:noisey}
        \STATE Convert to integer label $\tilde{y}_t^{(k)} = \arg\max_{i \in \{0, 1, \ldots, K-1\}} \tilde{y}_t^{\text{one-hot(k)}}[i]$
        \STATE Store $\tilde{\mathbf{x}}_t^{(k)}$ in $\tilde{\mathbf{X}}$ and $\tilde{y}_t^{(k)}$ in $\tilde{\mathbf{Y}}$
    \ENDFOR
\ENDFOR
\RETURN $(\tilde{\mathbf{X}}, \tilde{\mathbf{Y}})$
\end{algorithmic}
\end{algorithm}

\begin{theorem}[Privacy of DP-CDA (Algorithm~\ref{alg:dpcda})]
\label{theorem1}
Consider Algorithm~\ref{alg:dpcda} with input (privacy-sensitive) dataset $\mathbb{D} = \{ (\mathbf{x}_i, y_i) \}_{i=1}^N$, order of mixture $l$, $\ell_2$ norm clipping parameter $c$, and noise parameters $(\sigma_x, \sigma_y)$. Then Algorithm~\ref{alg:dpcda} releases a $(\epsilon, \delta)$-differentially private synthetic dataset $(\tilde{\mathbf{X}}, \tilde{\mathbf{Y}})$ consisting $T$ samples for any $0 < \delta < 1$ and $\alpha \geq 3$, where
\begin{align*}
    \epsilon &= \min_{\alpha \in \{3, 4, \ldots\}} T\varepsilon'(\alpha) + \frac{\log\frac{1}{\delta}}{\alpha-1}.
\end{align*}
Here, $\varepsilon'(\alpha) = \frac{1}{\alpha-1} \log \left( 1 + p^2 {\alpha \choose 2} \min \left\{ 4(e^{\varepsilon(2)} - 1), 2 e^{\varepsilon(2)} \right\} + 4 G(\alpha) \right)$,\\ where $\varepsilon(\alpha) = \frac{\alpha}{l^2}\left(\frac{2c^2}{\sigma_x^2} + \frac{1}{\sigma_y^2}\right)$, $G(\alpha) = \sum_{j=3}^{\alpha} p^{j} {\alpha \choose j} \sqrt{B(2 \left\lfloor \frac{j}{2} \right\rfloor) \cdot B(\left\lceil \frac{j}{2} \right\rceil)}$,\\ and $B(l) = \sum_{i=0}^{l} (-1)^{i} {l \choose i} e^{(i-1) \varepsilon(i)}$.
\end{theorem}

\begin{proof}
Consider a mechanism $\mathcal{M}$ that takes $l$ data points and is ($\alpha, \varepsilon(\alpha)$)-RDP. According to~\cite{wang19b}, a new mechanism $\mathcal{M'}$ that selects $l$ data points out of $N$ data points uniformly at random, and applies $\mathcal{M}$ is ($\alpha, \varepsilon'(\alpha)$)-RDP for any integer $\alpha \ge 2$, where 
\begin{align*}\label{eqn:ep_alpha}
\varepsilon'(\alpha) = \\
\frac{1}{\alpha-1} \log \left( 1 + p^2 {\alpha \choose 2} \min \left\{ 4(e^{\varepsilon(2)} - 1),  e^{\varepsilon(2)}\min\{2, (e^{\varepsilon(\infty)} - 1)\} \right\} + 4 G(\alpha) \right),
\end{align*}
$G(\alpha) = \sum_{j=3}^{\alpha} p^{j} {\alpha \choose j} \sqrt{B(2 \left\lfloor \frac{j}{2} \right\rfloor) \cdot B(\left\lceil \frac{j}{2} \right\rceil)}$, and $B(l) = \sum_{i=0}^{l} (-1)^{i} {l \choose i} e^{(i-1) \varepsilon(i)}$.

Now, computing the average of $l$ data points (Step~\ref{alg:dpcda:noisex} of Algorithm~\ref{alg:dpcda}) is $(\alpha,\frac{\alpha}{2(\frac{\sigma_x}{\Delta_x})^{2}})$-RDP~(Proposition~\ref{prop:rdp_gauss_mech}). Similarly, computing the average of $l$ one-hot-encoded vectors (Step~\ref{alg:dpcda:noisey} of Algorithm~\ref{alg:dpcda}) is $(\alpha,\frac{\alpha}{2(\frac{\sigma_y}{\Delta_y})^{2}})$-RDP~(Proposition~\ref{prop:rdp_gauss_mech}). Therefore, according to Proposition~\ref{prop:composition_rdp}, computing a synthetic data point is $(\alpha,\varepsilon(\alpha))$- RDP, where
\begin{align*}
    \varepsilon(\alpha) &= \frac{\alpha}{2}(\frac{\Delta ^{2}_{x}}{\sigma_{x^{2}}}+\frac{\Delta ^{2}_{y}}{\sigma_{y^{2}}}).
\end{align*}
Since in our setup $\|\mathbf{x}_i\|_2 \leq c$ and the rows of $\mathbf{Y}$ are one-hot-encoded, it is easy to show that $\Delta_x = \frac{2c}{l}$ and $\Delta_y = \frac{\sqrt{2}}{l}$. Therefore,
\begin{align*}
    \varepsilon(\alpha) &= \frac{\alpha}{l^{2}}(\frac{2c^{2}}{\sigma_x^2}+\frac{1}{\sigma_y^2}).
\end{align*}
Now, since we select $l$ data points randomly out of $N$ total data points at each step $t$ (see Step~\ref{alg:dpcda:random-selection} of Algorithm~\ref{alg:dpcda}), generation of the $t$-th synthetic sample is ($\alpha, \varepsilon'(\alpha)$)-RDP~\cite{wang19b}, where $\varepsilon'(\alpha)$ is given above. Simple algebra leads to the expression $\varepsilon'(\alpha) = \frac{1}{\alpha-1} \log \left( 1 + p^2 {\alpha \choose 2} \min \left\{ 4(e^{\varepsilon(2)} - 1), 2 e^{\varepsilon(2)} \right\} + 4 G(\alpha) \right)$. Since there are $T$ total synthetic samples are generated, the overall algorithm is $(\epsilon, \delta)$-differentially private for any $0 < \delta < 1$ according to Proposition~\ref{prop:rdp_dp}, where $\epsilon = \min_{\alpha \in \{3, 4, \ldots\}} T\varepsilon'(\alpha) + \frac{\log\frac{1}{\delta}}{\alpha-1}$.
\end{proof}
\begin{Rem}
As mentioned before, our algorithm closely follows the algorithm proposed in~\cite{lee2019synthesizing}. However, our privacy analysis is different from that presented in~\cite{lee2019synthesizing}. More specifically, the $\varepsilon(\alpha)$ as computed in~\cite{lee2019synthesizing} is $\varepsilon(\alpha) = \frac{\alpha}{2l^2}\left(\frac{d_x}{\sigma_x^2}+\frac{d_y}{\sigma_y^2}\right)$, which explicitly depends on the input and output data dimensions \( d_x \) and \( d_y \). This dimensional dependence introduces a practical limitation: as the data dimension increases, achieving a target privacy budget requires injecting more noise, leading to notable degradation in model utility. In contrast, our analysis of Algorithm~\ref{alg:dpcda} eliminates this dependency on data dimensionality. The resulting bound is
$\varepsilon(\alpha) = \frac{\alpha}{l^2} \left( \frac{2c^2}{\sigma_x^2} + \frac{1}{\sigma_y^2} \right)$.
This formulation ensures that the privacy loss is independent of the ambient feature space dimensions, making it especially suitable for high-dimensional data settings. As such, our privacy guarantee is stricter than the work of~\cite{lee2019synthesizing}, i.e., we achieve smaller $\epsilon$ for a given utility. This is intuitive, since the utility of an algorithm is dictated by the additive noise variance, whereas the level of privacy is dictated by $\varepsilon(\alpha)$, which is crucial for real-world applications, where maintaining both strong privacy and high utility in high-dimensional datasets is a key challenge.
\end{Rem}

\begin{figure}[t]
    \centering
    \includegraphics[width=0.8\textwidth]{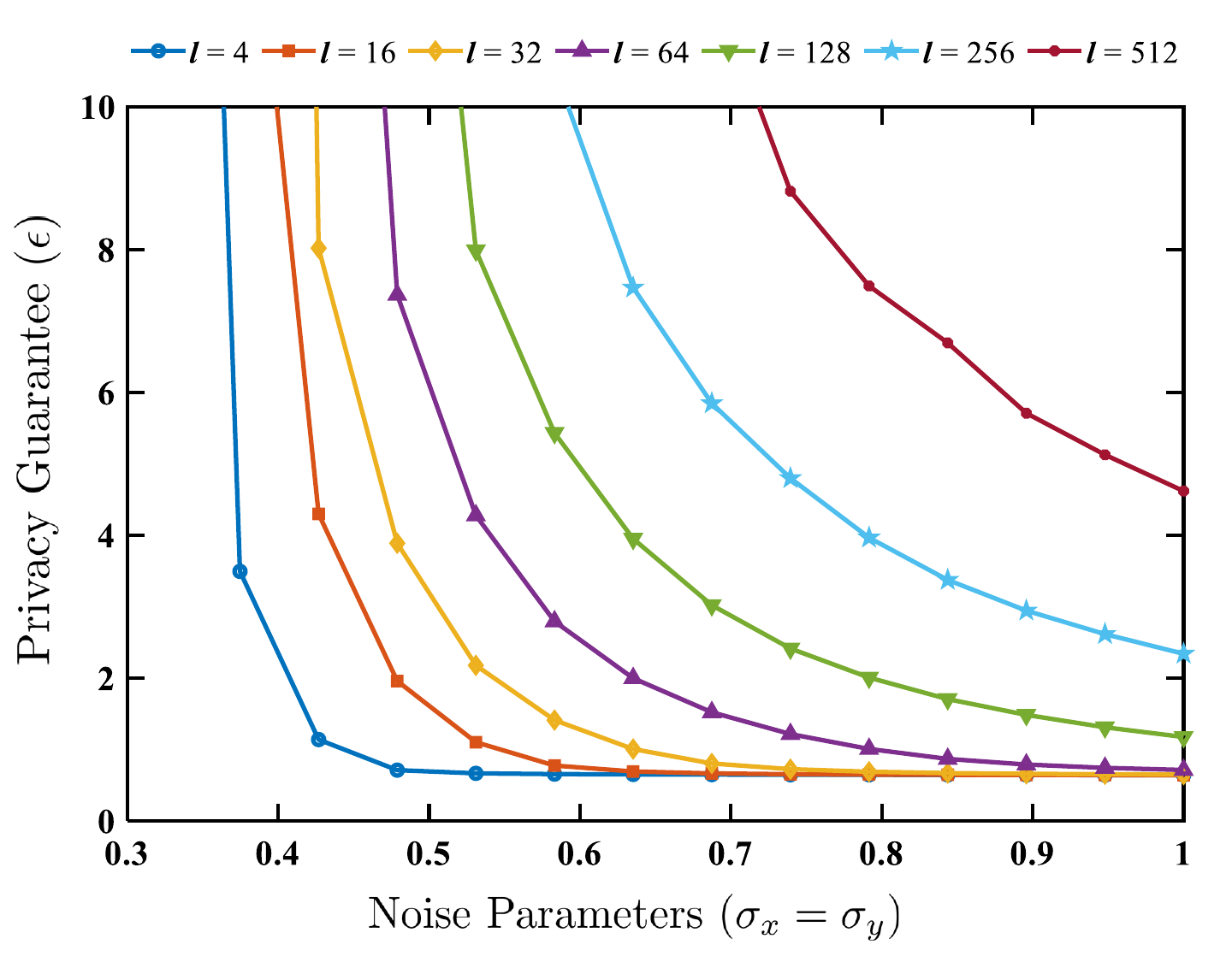}
    \caption{Privacy guarantee as a function of noise parameters.}
    \label{fig:fig3}
\end{figure}

\begin{Rem}
In Figure \ref{fig:fig3}, we illustrate how the overall privacy budget $\epsilon$ varies with the noise parameters ($\sigma_x, \sigma_y$) for different values of the order of mixture $l$ for a constant $N=$ 60,000. This plot clearly indicates that, for a fixed noise parameter, the privacy risk increases as \( l \) increases. That is, by selecting a larger value of \( l \), a smaller amount of noise will suffice to achieve a given level of privacy guarantee. Additionally, as depicted in Figure~\ref{fig:fig2}, the utility depends on $l$ as well. As such, there exists an \emph{optimal} value of $l$ for a given dataset and noise parameters that will result in the best utility and privacy guarantee. We will demonstrate this in Section~\ref{sec:res}. Note that, optimal values of $l$, and $c$ is dependent on the dataset and model architecture.
\end{Rem}

\section{Experimental Results}\label{sec:res}
\noindent\textbf{Experimental Setup. }In this section, we empirically validate the performance of our algorithm and the claim that our tighter privacy analysis of Algorithm~\ref{alg:dpcda} provides a stricter privacy guarantee than~\cite{lee2019synthesizing} for a given utility. For generating synthetic dataset according to Algorithm~\ref{alg:dpcda}, we consider three diverse and well-known datasets -- the MNIST dataset~\cite{lecun1998gradient}, the FashionMNIST dataset~\cite{xiao2017fashion}, and the CIFAR-10 dataset~\cite{krizhevsky2009learning}. Note that the proposed DP-CDA framework conforms to the standard $(\epsilon, \delta)$-Differential Privacy formulation, as stated in Theorem~\ref{theorem1}. To ensure an unbiased evaluation of data utility, the synthetic datasets generated by DP-CDA were used to train models, which were then tested on the original (non-synthetic) test sets of MNIST, FashionMNIST, and CIFAR-10. These experiments were conducted under different privacy parameters to assess the effectiveness of the proposed framework in maintaining a favorable privacy-utility balance. As mentioned before, we intend to train a neural network with the synthetic data generated by Algorithm~\ref{alg:dpcda}, and then evaluate the performance of the model on the test partition of the corresponding real dataset for different order of mixture $l \in \{1, 2, 4, \ldots, 512\}$ and noise levels $(\sigma_x, \sigma_y)$. From Theorem~\ref{theorem1}, we note that the overall privacy budget $\epsilon$ depends on both $l$ and $(\sigma_x, \sigma_y)$. For the neural network model, we consider a shallow convolutional neural network (CNN). The details of the architecture of the CNN are provided in Table~\ref{table:cnn-model}. 

Note that our model includes batch normalization and dropout to enhance training stability and reduce overfitting, i.e., robust for image classification tasks. We chose to implement the Adam optimizer, setting an initial learning rate of 0.001, and employed a learning rate scheduler to adjust the learning rate as needed. As mentioned before, we recorded the accuracies on the original test partition of the base dataset, which is unmixed and noise-free.\\

\begin{table}[t]
\centering
\caption{Details/parameters of the model under consideration}
\label{table:cnn-model}
\resizebox{1.0\textwidth}{!}{
\begin{tabular}{l l l}
\hline
\hline
\textbf{Layer Type}      & \textbf{Parameters}                                           & \textbf{Output Shape}    \\ \hline
Input                    & -                                                            & (1, 28, 28)              \\ 

Conv2D                   & Filters=32, Kernel=5x5, Stride=1, Padding=2                  & (32, 28, 28)             \\ 

ReLU Activation          & -                                                            & (32, 28, 28)             \\ 

BatchNorm2D              & Num Features=32                                             & (32, 28, 28)             \\ 

MaxPooling2D             & Pool Size=2x2, Stride=2                                      & (32, 14, 14)             \\ 

Conv2D                   & Filters=64, Kernel=3x3, Stride=1, Padding=1                  & (64, 14, 14)             \\ 

ReLU Activation          & -                                                            & (64, 14, 14)             \\ 

BatchNorm2D              & Num Features=64                                             & (64, 14, 14)             \\ 

MaxPooling2D             & Pool Size=2x2, Stride=2                                      & (64, 7, 7)               \\ 

Flatten                  & -                                                            & (3136)                   \\ 

Fully Connected (FC1)    & Input=3136, Output=100                                       & (100)                    \\ 

ReLU Activation          & -                                                            & (100)                    \\ 

Dropout                  & Probability=0.5                                             & (100)                    \\ 

Fully Connected (FC2)    & Input=100, Output=100                                        & (100)                    \\ 

ReLU Activation          & -                                                            & (100)                    \\ 

Dropout                  & Probability=0.5                                             & (100)                    \\ 

Fully Connected (Output) & Input=100, Output=10                                         & (10)                     \\ 
\hline
\end{tabular}
}
\end{table}

\begin{table}[htbp]
\centering
\caption{Average test accuracy $\pm$ standard deviation for each dataset with different $l$ and privacy budgets ($\epsilon$)}
\label{tab:1}
\resizebox{0.8\textwidth}{!}{
\begin{tabular}{ccccc}
\hline
\hline
\textbf{Dataset} & $l$ & $\epsilon = \infty$ & $\epsilon = 20$ & $\epsilon = 10$ \\
\hline
\multirow{2}{*}{MNIST} & 1 & 0.9885
 ± 0.0006 & 0.1080 ± 0.0055 & 0.1081 ± 0.0055 \\
                       & 2 & 0.9746
 ± 0.0011 & 0.6722 ± 0.1153 & 0.7076 ± 0.0480 \\
                       & 4 & 0.9462 ± 0.0031 & \textbf{0.7808 ± 0.0149} & \textbf{0.7807 ± 0.0188} \\
                       & 8 & 0.9045 ± 0.0064 & 0.7802 ± 0.0011 & 0.7673 ± 0.0156 \\
                       & 16 & 0.8607 ± 0.0052 & 0.7691 ± 0.0183 & 0.7585 ± 0.0165 \\
                       & 32 & 0.8237 ± 0.0106 & 0.7577 ± 0.0156 & 0.6738 ± 0.1929 \\
                       & 64 & 0.7962 ± 0.0091 & 0.7175 ± 0.0254 & 0.6319 ± 0.1777 \\
                       & 128 & 0.7829 ± 0.0176 & 0.6656 ± 0.0472 & 0.5431 ± 0.2180 \\
                       & 256 & 0.7693 ± 0.0189 & 0.3620 ± 0.2573 & 0.3403 ± 0.2010 \\
                       & 512 & 0.7603 ± 0.0181 & 0.2597 ± 0.1351 & 0.1575 ± 0.0877 \\
\hline
\multirow{2}{*}{FashionMNIST} & 1 & 0.9064 $\pm$ 0.0017 & 0.3886 $\pm$ 0.1910 & 0.3619 $\pm$ 0.1451 \\
                              & 2 & 0.8851 $\pm$ 0.0026 & 0.6712 $\pm $0.0122 & 0.6655 $\pm$ 0.0105 \\
                        & 4 & 0.8454 $\pm$ 0.0033 & \textbf{0.6791 $\pm$ 0.0061} & \textbf{0.6748 $\pm$ 0.0052} \\
                       & 8 & 0.7881 $\pm$ 0.0031 & 0.6719 $\pm$ 0.0059 &  0.6703 $\pm$ 0.0095 \\
                       & 16 & 0.7370 $\pm$ 0.0091 & 0.6652 $\pm$ 0.0058 & 0.6653 $\pm$ 0.0093 \\
                       & 32 & 0.6970 $\pm$ 0.0118 & 0.6542 $\pm$ 0.0159 & 0.6629 $\pm$ 0.0049 \\
                       & 64 & 0.6579 $\pm$ 0.0162 & 0.6538 $\pm$ 0.0116 & 0.6498 $\pm$ 0.0071 \\
                       & 128 & 0.6435 $\pm$ 0.0209 & 0.6532 $\pm$ 0.0107 & 0.6399 $\pm$ 0.0184 \\
                       & 256 & 0.6273 $\pm$ 0.0209 & 0.5966 $\pm$ 0.2124 & 0.6203 $\pm$ 0.0146 \\
                       & 512 & 0.6181 $\pm$ 0.0221 & 0.5243 $\pm$ 0.0149 & 0.4574 $\pm$ 0.1811 \\
\hline
\multirow{2}{*}{CIFAR10} & 1 & 0.7210 $\pm$ 0.0035 & 0.1001 $\pm$ 0.0000 & 0.1001 $\pm$ 0.0000 \\
                         & 2 & 0.6816 $\pm$ 0.0057 & 0.1994 $\pm$ 0.0602 & 0.1922 $\pm$ 0.0523 \\
                    & 4 & 0.6241 $\pm$ 0.0068 & \textbf{0.2315 $\pm$ 0.0449} & \textbf{0.2313 $\pm$ 0.0455} \\
                       & 8 & 0.5517 $\pm$ 0.0070 & 0.2258 $\pm$ 0.0435 &  0.1828$ \pm$ 0.0681 \\
                       & 16 & 0.4503 $\pm$ 0.0105 & 0.1761 $\pm$ 0.0657 & 0.1860 $\pm$ 0.0580 \\
                       & 32 & 0.3758 $\pm$ 0.0076 & 0.1938 $\pm$ 0.0542 & 0.2026 $\pm$ 0.0523 \\
                       & 64 & 0.3376 $\pm$ 0.0108 & 0.2001 $\pm$ 0.0515 & 0.1854 $\pm$ 0.0570 \\
                       & 128 & 0.3031 $\pm$ 0.0077 & 0.1579 $\pm$ 0.0498 & 0.1341 $\pm$ 0.0457 \\
                       & 256 & 0.2758 $\pm$ 0.0100 & 0.1449 $\pm$ 0.0455 & 0.1390 $\pm$ 0.0378 \\
                       & 512 &  0.2574 $\pm$ 0.0068 & 0.1146 $\pm$ 0.0299 & 0.1061 $\pm$ 0.0179 \\
\hline
\end{tabular}
}
\end{table}

\noindent\textbf{Effect of the order of mixture $(l)$ on utility. }In Table~\ref{tab:1}, we present the average accuracy and standard deviation (over 10 independent runs) for each of the aforementioned base datasets with different $l$ and privacy budgets $\epsilon$. Note that, for each \( l \), we set the noise parameters \( \sigma_x \) and \( \sigma_y \) in such a way that we achieve the targeted $\epsilon$. The column \( \epsilon = \infty \) is the non-privacy-preserving case $(\sigma_x = 0, \sigma_y=0)$. As such, $l=1$ and $\epsilon=\infty$ indicate conventional non-privacy-preserving training with the base dataset's training partition. From Table~\ref{tab:1}, we can observe that as $l$ increases for the non-private case, the utility decreases. This trend is observed across all three base datasets that we considered. In contrast, when examining the private dataset, we noticed a different trend. According to Theorem~\ref{theorem1}, the amount of noise required to guarantee a given privacy level decreases as the order of mixture $l$ increases. However, large $l$ values cause \emph{over-mixing}, which can dilute class-specific information and degrade model performance. Consequently, there exists an optimal $l^*$ that maximizes the utility of the generated synthetic dataset. To empirically demonstrate the existence of such $l^*$, we repeated the classification experiments using synthetic datasets generated by DP-CDA for different values of $l$. For each choice of $l$, we carefully calibrated the noise parameters ($\sigma_x, \sigma_y$) to ensure a fixed privacy level, i.e., constant ($\epsilon, \delta$) values. We then trained the CNN model on each synthetic dataset and evaluated the performance on the original (non-synthetic) test set. In our experiments, for all three base datasets and different $\epsilon$ levels, we found that the model obtained the best test accuracy for \( l^* = 4 \). The results, summarized in Table~\ref{tab:1} (columns $\epsilon=20$ and $\epsilon=10$), clearly demonstrate that the test accuracy varies with the choice of $l$, with the highest accuracy consistently achieved at $l^*=4$ across all datasets and privacy budgets. This optimal $l^*$ arises from a trade-off between privacy-induced noise and the degree of data mixing. Smaller $l$ values limit the averaging effect, resulting in higher sensitivity and greater noise injection, whereas larger $l$ values cause excessive averaging that blurs class-discriminative features. Hence, an intermediate $l^*$ offers the best balance, minimizing injected noise while maintaining sufficient class separability. Although $l^*$ may vary slightly across datasets depending on data complexity and dimensionality, our experiments suggest a stable pattern in which $l^* \approx 4$ provides consistently high utility under equivalent privacy constraints.

\begin{figure}[ht]
    \centering
    \includegraphics[width=0.8\textwidth]{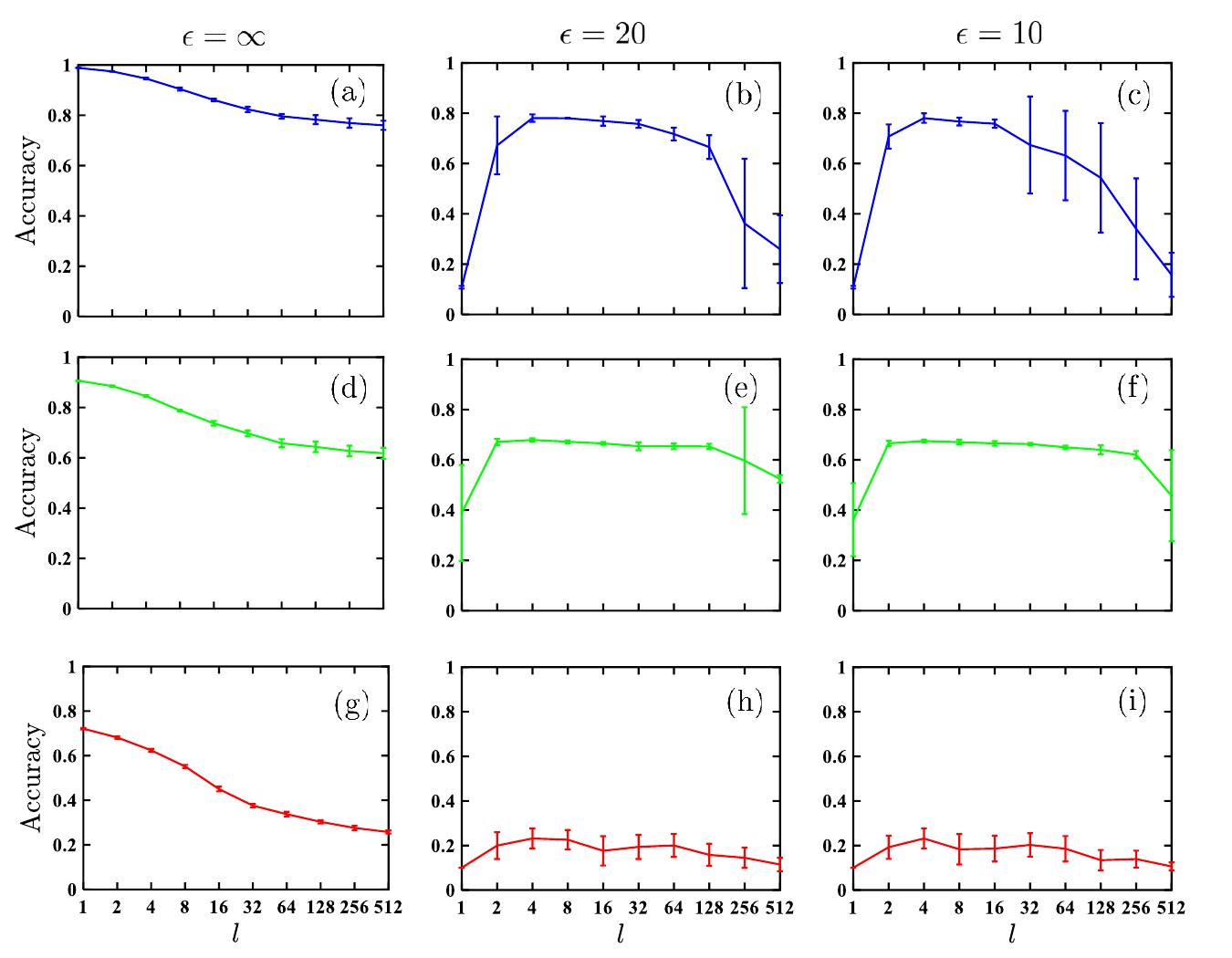}
    \caption{Average classification accuracy across varying mixture degrees $l$ for the MNIST, FashionMNIST, and CIFAR-10 datasets. The error bars indicate the standard deviation of run-to-run variations. Subfigures represent results under different privacy budgets: (a) MNIST, $\epsilon = \infty$; (b) MNIST, $\epsilon = 20$; (c) MNIST, $\epsilon = 10$; (d) FashionMNIST, $\epsilon = \infty$; (e) FashionMNIST, $\epsilon = 20$; (f) FashionMNIST, $\epsilon = 10$; (g) CIFAR-10, $\epsilon = \infty$; (h) CIFAR-10, $\epsilon = 20$; (i) CIFAR-10, $\epsilon = 10$.}
    \label{fig:fig5}
\end{figure}
For improved clarity and interpretation, we present the average classification accuracy across varying mixture degrees $l$ for the three datasets in Figure~\ref{fig:fig5}. As previously described, each experiment was repeated ten times for every combination of mixture degree $l$ and privacy budget $\epsilon$, to account for variability and provide a robust statistical evaluation. As such, the error bars in Figure~\ref{fig:fig5} indicate the standard deviation of the run-to-run variations. From the plots, it is evident that in the non-private setting ($\epsilon = \infty$), the classification performance is highly consistent across all three datasets. This is reflected by the minimal error bars in Figures~\ref{fig:fig5}(a), (d), and (g).

In the case of moderate privacy ($\epsilon = 20$), we observe a noticeable increase in variability for the MNIST and FashionMNIST datasets, particularly for higher $l$ values, such as $l = 256$, as shown in Figures~\ref{fig:fig5}(b) and (e). Interestingly, the CIFAR-10 dataset remains relatively stable under the same privacy budget, exhibiting narrower error bars across all values of $l$, as shown in Figure~\ref{fig:fig5}(h). Under a stronger privacy constraint ($\epsilon = 10$), the MNIST dataset displays greater variability, particularly at higher values of $l$, indicating that the utility of synthetic data becomes more sensitive in this regime. In contrast, both FashionMNIST and CIFAR-10 continue to demonstrate more stable behavior with relatively smaller error bars, as shown in Figures~\ref{fig:fig5}(f) and (i). Overall, the error bars remain reasonably narrow in most scenarios, suggesting that our synthetic data generation framework is robust and yields consistent performance across multiple runs, even under varying privacy constraints and mixture degrees.\\

\noindent \textbf{Performance of DP-CDA on Tabular Data.} To evaluate the applicability of our proposed DP-CDA algorithm on real-world tabular data, we utilize the UCI-Adult dataset~\cite{adult_2}, which contains privacy-sensitive information such as income, gender, marital status, etc. This dataset contains some categorical features that are converted into a one-hot encoded format, and both training and test datasets are standardized using z-score normalization followed by $\ell_2$ norm clipping. Synthetic data is generated using our DP-CDA method by averaging $l$ randomly selected samples from the same class and adding Gaussian noise. To preserve the categorical semantics after mixing, each one-hot encoded segment is re-encoded using an \texttt{argmax}-based decoding followed by re-encoding. Finally, a decision tree classifier is trained on the synthetic data, and its performance is evaluated on the non-synthetic test set to assess the utility of the generated dataset.
\begin{table}[t]
\centering
\caption{Average test accuracy $\pm$ standard deviation for UCI-Adult dataset~\cite{adult_2} with different $l$ and privacy budgets ($\epsilon$)}
\label{tab:res:adult}
\begin{tabular}{cccc}
\hline
\hline
$l$   & $\epsilon=\infty$           & $\epsilon=20$             & $\epsilon=10$             \\
\hline
1   & 0.7030 ± 0.0169 & 0.6398 ± 0.0164 & 0.6517 ± 0.0198 \\
2   & 0.6084 ± 0.0302 & 0.5750 ± 0.0259 & 0.5589 ± 0.0193 \\
4   & 0.6582 ± 0.0403 & 0.5942 ± 0.0307 & 0.5855 ± 0.0251 \\
8   & 0.7115 ± 0.0181 & 0.6497 ± 0.0302 & 0.6344 ± 0.0435 \\
16  & 0.6941 ± 0.0264 & 0.7278 ± 0.0367 & 0.7114 ± 0.0263 \\
32  & 0.7433 ± 0.0281 & 0.7595 ± 0.0150 & 0.7776 ± 0.0078 \\
64  & 0.7531 ± 0.0185 & \textbf{0.7866 ± 0.0103} & \textbf{0.7821 ± 0.0143} \\
128 & 0.7167 ± 0.0297 & 0.7666 ± 0.0179 & 0.7634 ± 0.0221 \\
256 & 0.6657 ± 0.0403 & 0.7363 ± 0.0255 & 0.7561 ± 0.0154 \\
512 & 0.6950 ± 0.0623 & 0.7129 ± 0.0034 & 0.7135 ± 0.0035 \\
\hline
\end{tabular}
\end{table}
Table~\ref{tab:res:adult} summarizes the performance of our DP-CDA algorithm on the UCI-Adult dataset across different privacy budgets and mixture orders $l$. The results clearly indicate that the choice of $l$ significantly influences the trade-off between utility and privacy. In particular, for lower values of $l$, the model suffers from reduced accuracy due to insufficient averaging and higher variance. As $l$ increases, the utility improves consistently, reaching peak performance at $l=64$, where the algorithm achieves 78.66\% accuracy for $\epsilon=20$ and 78.21\% for $\epsilon=10$. These results highlight the ability of our algorithm to maintain strong predictive performance even under strict privacy constraints. Furthermore, the stability and utility observed on this real-world and privacy-sensitive tabular dataset demonstrates that DP-CDA is not only theoretically sound but also practically applicable across diverse data modalities beyond images.\\

\noindent\textbf{Performance Comparison With Existing Works.} Finally, we compare our proposed privacy analysis and classification accuracy with those of the existing approaches in Table \ref{tab:2}. The first method is random projection~\cite{xu2017dppro}, which involves drawing a random projection matrix, followed by a transformation and the addition of Gaussian noise. This method achieves an accuracy close to 10\%. Similar results can be observed with the local perturbation algorithm, which applies the Gaussian mechanism to each data point. For these two algorithms, the outcomes are nearly equivalent to random guessing. 
\begin{table}[t]
\centering
\caption{Performance comparison of the proposed algorithm with existing approaches in terms of privacy, utility (accuracy), and computational complexity}
\label{tab:2}
\begin{threeparttable} 
\resizebox{\textwidth}{!}{ 
\begin{tabular}{ccc|cc|cc|c}
\hline
\hline
\multirow{2}{*}{\textbf{Algorithm}} & \multicolumn{2}{c|}{\textbf{MNIST}}         & \multicolumn{2}{c|}{\textbf{FashionMNIST}}  & \multicolumn{2}{c|}{\textbf{CIFAR10}}       & \multirow{2}{*}{\textbf{Comp. Complexity}} \\ \cline{2-7}
 &
  \multicolumn{1}{c}{$\epsilon$ = 20} &
  $\epsilon$ = 10 &
  \multicolumn{1}{c}{$\epsilon$ = 20} &
  $\epsilon$ = 10 &
  \multicolumn{1}{c}{$\epsilon$ = 30} &
  $\epsilon$ = 20 &  \\ \hline
Random Projection~\cite{xu2017dppro}          & \multicolumn{1}{c}{0.104} & 0.100 & \multicolumn{1}{c}{-}     & -     & \multicolumn{1}{c}{0.101} & -     & - \\ 
Local Perturbation~\cite{agrawal2000privacy}         & \multicolumn{1}{c}{0.098} & 0.098 & \multicolumn{1}{c}{-}     & -     & \multicolumn{1}{c}{0.100} & -     & - \\ 
DP-Mix~\cite{lee2019synthesizing}                     & \multicolumn{1}{c}{0.800} & 0.782 & \multicolumn{1}{c}{-\tnote{a}}     & -\tnote{a}     & \multicolumn{1}{c}{0.269} & 0.244 & $\mathcal{O}(Tl(d_x + d_y))$ \\
G-PATE~\cite{long2021g}         & \multicolumn{1}{c}{-} & 0.510 & \multicolumn{1}{c}{-}     & 0.500     & \multicolumn{1}{c}{-} & -     & - \\
DP-Merf~\cite{harder2021dp}         & \multicolumn{1}{c}{-} & 0.630 & \multicolumn{1}{c}{-}     & 0.620     & \multicolumn{1}{c}{-} & -     & - \\
GS-WGAN~\cite{chen2020gs}         & \multicolumn{1}{c}{-} & 0.800 & \multicolumn{1}{c}{-}     & 0.650     & \multicolumn{1}{c}{-} & -     & - \\
DP-CDA (Ours)                                         & \multicolumn{1}{c}{0.796} & 0.795 & \multicolumn{1}{c}{0.685} & 0.680 & \multicolumn{1}{c}{0.276} & 0.276 & $\mathcal{O}(Tld_x)$ \\ \hline
\end{tabular}
} 
\begin{tablenotes} 
\item[a] We note that the method by which the authors in~\cite{lee2019synthesizing} computed the overall $\epsilon$ resulted in numerical instability issues for the FashionMNIST dataset. This could also result from our hardware. Consequently, the experiment was not reproduced for FashionMNIST, as the noise variances could not be calculated. 
\end{tablenotes} 
\end{threeparttable} 
\end{table}
Additionally, DP-Mix~\cite{lee2019synthesizing} employs a noise-induced version of an averaged data point from randomly selected data points. This algorithm performs marginally better than our proposed method only when $\epsilon = 20$ on the MNIST dataset; however, in other cases, our proposed method achieves better utility for a given privacy level. We argue that it is evident from Table \ref{tab:2} that our proposed algorithm offers stricter privacy guarantees, reaching the targeted privacy level with a smaller amount of noise compared to the existing methods.

We have also incorporated comparisons against recent deep generative methods, such as G-PATE~\cite{long2021g}, DP-Merf~\cite{harder2021dp}, and GS-WGAN~\cite{chen2020gs}. These methods achieve reasonable accuracy (e.g., 0.800 for GS-WGAN on MNIST with $\epsilon = 10$), showcasing their strength in privacy-preserving data generation. However, while such generative methods can offer competitive utility, they suffer from high computational overhead and training instability. Moreover, the synthetic images they produce tend to exhibit strong visual similarity to the original data, which may raise additional privacy concerns in sensitive applications. These limitations reinforce the practicality and robustness of our proposed approach.

Additionally, we compared the computational complexity of the proposed DP-CDA algorithm with that of the existing data publishing algorithms. As shown in Table \ref{tab:2}, the computational complexity of the proposed DP-CDA algorithm is nearly identical to the DP-Mix algorithm. However, the proposed DP-CDA algorithm offers the advantages of higher utility and stricter privacy guarantees. Furthermore, to provide a clearer understanding of the computational aspects, we discuss the scalability and overhead of the proposed approach in large-scale applications. The time complexity of DP-CDA grows linearly with the number of iterations, order of mixture, and feature dimension, i.e., $\mathcal{O}(Tld_x)$, ensuring that the algorithm remains efficient for high-dimensional datasets. In practical implementations, both data mixing and noise addition steps are suitable for parallel processing, allowing substantial computational acceleration on modern multi-core or GPU architectures. A key reason why scalability does not degrade performance in the proposed DP-CDA is that our privacy analysis is independent of the data dimension. Increasing the dimensionality of the dataset does not change noise variance to maintain the same privacy guarantee. Therefore, the utility of the synthetic data and the downstream model performance remain unaffected. Even when the data size or feature dimension grows, the algorithm does not suffer any additional accuracy loss. This shows that DP-CDA scales to large and high-dimensional datasets while keeping its performance stable. Therefore, despite introducing additional steps for privacy preservation, the proposed DP-CDA algorithm maintains low computational overhead and remains suitable for real-world large-scale deployments. Therefore, despite introducing additional steps for privacy preservation, the proposed DP-CDA algorithm maintains low computational overhead and remains suitable for real-world large-scale deployments.

\section{Limitations and Future Work}\label{sec:limit}
While our work provides higher utility and stricter privacy guarantees compared to existing algorithms, it does have some limitations. One key limitation is that the utility on the CIFAR10 dataset for our proposed data publishing algorithm is lower than that of the other datasets. To address this issue with CIFAR10, DP-CDA could be integrated into a privacy-enhanced data distillation framework that aligns synthetic data generation with probability distribution matching. This approach could enable more effective knowledge transfer, while adhering to strict privacy constraints. 

Additionally, in this work, data aggregation and synthetic data generation are performed in a centralized manner -- that is, DP-CDA assumes access to the entire dataset in one location. This centralization restricts its applicability in some real-world scenarios, such as healthcare or finance, where data is often siloed across multiple nodes/clients due to privacy or regulatory constraints. To this end, an interesting direction for future work would be to implement DP-CDA in federated settings and to conduct a theoretical analysis to determine the optimal order of mixture. Specifically, to adapt the proposed DP-CDA framework for decentralized settings, each client can locally generate class-specific mixtures by combining $l$ samples and adding Gaussian noise to ensure local differential privacy. These locally synthesized samples can then be transmitted to a central server, which can aggregate them to form the global synthetic dataset. However, since each client holds fewer samples, the sensitivity of local computation increases, necessitating higher noise levels to maintain the same privacy guarantee -- potentially reducing utility compared to centralized implementations. To address this limitation, one may explore incorporating an anti-correlated noise addition scheme~\cite{imtiaz2021cape}, wherein structured noise components cancel out during aggregation. This approach is expected to improve the overall privacy-utility balance and bring the decentralized performance closer to that of centralized DP-CDA.

\section{Conclusion}\label{sec:conc}
In this work, we propose DP-CDA, a synthetic data generation algorithm, and perform a tighter accounting of the privacy guarantee. DP-CDA combines randomly selected data points from a specific class and induces additive noise for preserving privacy. We empirically demonstrate that given a particular dataset, the number of data points utilized for generating each synthetic data point has an \emph{optimal} value, which offers the best utility and dictates the overall privacy budget. Through our extensive experiments, we show that DP-CDA significantly outperforms the existing data publishing algorithms, while being less computationally complex and providing a strong privacy guarantee. Overall, we achieved a favorable balance between utility, complexity, and privacy with the proposed algorithm.

\section*{Conflict of Interest}
The authors have no conflicts of interest to disclose.


\small
\bibliographystyle{ieeetr}
\bibliography{references}

\newpage
\appendix 
\section{Relevant Definitions and Theorems}
\label{sec:appen}
\begin{definition}[($\epsilon,\delta$)-DP \cite{dwork2006calibrating}]
  An algorithm $f : \mathcal{D} \mapsto \mathcal{T}$ provides ($\epsilon,\delta$)-differential privacy (($\epsilon,\delta$)-DP) if $\Pr(f(\mathbb{D})\in \mathcal{S}) \le \delta + e^\epsilon \Pr(f(\mathbb{D}^\prime)\in \mathcal{S} )$ for all measurable $\mathcal{S} \subseteq \mathcal{T} $ and for all neighboring datasets $\mathbb{D},\mathbb{D}^\prime \in \mathcal{D}$.
\end{definition}

Here, $\epsilon > 0,\ 0 < \delta < 1$ are the privacy parameters, and determine how the algorithm will perform in providing privacy/utility. The parameter $\epsilon$ indicates how much the algorithm's output deviates in probability when we replace one single person's data with another. The parameter $\delta$ indicates the probability that the privacy mechanism fails to give the guarantee of $\epsilon$. Intuitively, higher privacy results in poor utility. That is, smaller $\epsilon$ and $\delta$ guarantee more privacy, but lower utility. There are several mechanisms to implement DP: Gaussian \cite{dwork2006calibrating}, Laplace mechanism \cite{dwork2014algorithmic}, random sampling, and exponential mechanism~\cite{mcsherry2007mechanism} are well-known. Among the additive noise mechanisms, the noise standard deviation is scaled by the privacy budget and the sensitivity of the function.\\

\begin{definition}[$\ell_2$ sensitivity \cite{dwork2006calibrating}] The $\ell_2$- sensitivity of vector valued function $f(\mathbb{D})$ is $\Delta := \max_{\mathbb{D},\ \mathbb{D}^\prime} \Vert f(\mathbb{D}) - f(\mathbb{D}') \Vert_2$, where $\mathbb{D}$ and $\mathbb{D}^\prime$ are neighboring datasets.
\end{definition} 
The $\ell_2$ sensitivity of a function gives the upper bound of how much the function can change if one sample at the input is changed. Consequently, it dictates the amount of randomness/perturbation needed at the function's output to guarantee DP. In other words, it captures the maximum change in the output by changing any one user in the worst-case scenario.\\

\begin{definition}[Gaussian Mechanism \cite{dwork2014algorithmic}] Let $f: \mathcal{D} \mapsto \mathcal{R}^D$ be an arbitrary function with $\ell_{2}$ sensitivity $\Delta$. The Gaussian mechanism with parameter $\tau$ adds noise from $\mathcal{N}(0,\tau^2)$ to each of the $D$ entries of the output and satisfies $(\epsilon,\delta)$-DP for $\epsilon \in (0,1)$ and $\delta \in (0,1)$, if $\tau \geq \frac{\Delta}{\epsilon} \sqrt{2\log\frac{1.25}{\delta}}$.
\end{definition} 
Here, $(\epsilon,\delta)$-DP is guaranteed by adding noise drawn form $\mathcal{N}(0,\tau^2)$ distribution. Note that, there are an infinite number of combinations of $(\epsilon,\delta)$ for a given $\tau^2$~\cite{imtiaz2021cape}.\\

\begin{definition}[R\'enyi Differential Privacy (RDP) \cite{mironov2017renyi}] A randomized algorithm $f$ : $\mathcal{D} \mapsto \mathcal{T}$ is $(\alpha,\epsilon_{r})$-R\'enyi differentially private if, for any adjacent $\mathbb{D},\ \mathbb{D}^\prime \in \mathcal{D}$, the following holds: $D_{\alpha}(\mathcal{A}(\mathbb{D})\ || \mathcal{A}(\mathbb{D}^\prime)) \leq \epsilon_{r}$. Here, $D_{\alpha}(P(x)||Q(x))=\frac{1}{\alpha-1}\log\mathbb{E}_{x \sim Q} \Big(\frac{P(x)}{Q(x)}\Big)^{\alpha}$ and $P(x)$ and $Q(x)$ are probability density functions defined on $\mathcal{T}$.
\end{definition} 

\begin{prop}[From RDP to DP~\cite{mironov2017renyi}]\label{prop:rdp_dp}
If $f$ is an $(\alpha,\epsilon_r)$-RDP mechanism, it also satisfies $(\epsilon_r+\frac{\log 1/\delta}{\alpha-1},\delta)$-DP for any $0<\delta<1$.
\end{prop}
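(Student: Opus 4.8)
The plan is to reduce the R\'enyi divergence hypothesis to a bound on the $\alpha$-th moment of the density ratio, and then use a ``bad event'' decomposition to recast it in the additive $(\epsilon,\delta)$ form. First I would fix an arbitrary pair of adjacent datasets $\mathbb{D},\mathbb{D}'$ and write $P$ and $Q$ for the densities of $f(\mathbb{D})$ and $f(\mathbb{D}')$ on $\mathcal{T}$. Unwinding the definition of $D_\alpha$, the RDP hypothesis $D_\alpha(P\|Q)\le\epsilon_r$ is exactly the moment bound $\mathbb{E}_{x\sim Q}\big[(P(x)/Q(x))^\alpha\big]\le e^{(\alpha-1)\epsilon_r}$, and this is the only consequence of RDP I will need.

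Next I would set $\epsilon := \epsilon_r + \tfrac{\log(1/\delta)}{\alpha-1}$ and introduce the ``bad set'' $B := \{x\in\mathcal{T} : P(x) > e^{\epsilon} Q(x)\}$. For any measurable $\mathcal{S}\subseteq\mathcal{T}$, I split $\mathcal{S}$ into $\mathcal{S}\cap B$ and $\mathcal{S}\setminus B$: on the second piece the density ratio is at most $e^\epsilon$ pointwise, so integrating gives $\Pr(f(\mathbb{D})\in\mathcal{S}) \le \Pr(f(\mathbb{D})\in B) + e^{\epsilon}\Pr(f(\mathbb{D}')\in\mathcal{S})$. It therefore suffices to show $\Pr(f(\mathbb{D})\in B)\le\delta$.

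To bound the mass of $B$ under $P$, I would write $\Pr(f(\mathbb{D})\in B) = \int_B \big(\tfrac{P(x)}{Q(x)}\big)^{\alpha}\big(\tfrac{P(x)}{Q(x)}\big)^{1-\alpha} Q(x)\,dx$, using $\alpha+(1-\alpha)=1$. Since $\alpha>1$, the exponent $1-\alpha$ is negative, so on $B$ (where $P(x)/Q(x)>e^\epsilon$) we have $\big(\tfrac{P(x)}{Q(x)}\big)^{1-\alpha} < e^{(1-\alpha)\epsilon}$; pulling this constant out leaves exactly $\mathbb{E}_{x\sim Q}[(P(x)/Q(x))^\alpha] \le e^{(\alpha-1)\epsilon_r}$. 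Hence $\Pr(f(\mathbb{D})\in B) < e^{(1-\alpha)\epsilon} e^{(\alpha-1)\epsilon_r} = e^{(\alpha-1)(\epsilon_r-\epsilon)}$, and substituting the chosen $\epsilon$ makes the exponent equal to $-\log(1/\delta)=\log\delta$, so $\Pr(f(\mathbb{D})\in B)<\delta$. Because the RDP hypothesis holds for every ordered pair of adjacent datasets, the same argument applies with the roles of $\mathbb{D}$ and $\mathbb{D}'$ exchanged, which yields $(\epsilon,\delta)$-DP.

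The main obstacle I anticipate is bookkeeping rather than anything conceptual: keeping track that $\alpha>1$ so the inequality on $B$ flips in the correct direction, handling the measure-theoretic edge case where $Q(x)=0$ (any mass $P$ places outside the support of $Q$ forces $D_\alpha(P\|Q)=\infty$, so under the hypothesis $P\ll Q$ and the ratio is well defined a.e.), and verifying that the $(\alpha-1)$ factors in the exponents cancel cleanly after the substitution $\epsilon=\epsilon_r+\log(1/\delta)/(\alpha-1)$.
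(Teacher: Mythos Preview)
Your argument is correct and is essentially the original proof from Mironov~(2017): the moment reformulation of the R\'enyi bound, the bad-set decomposition at threshold $e^{\epsilon}$, and the Markov-type estimate on $B$ are exactly the standard steps, and your algebra with the $(\alpha-1)$ factors checks out. The present paper does not supply its own proof of this proposition---it is quoted as background with a citation---so there is nothing to compare against beyond noting that what you wrote matches the cited source.
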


\begin{prop}[Composition of RDP~\cite{mironov2017renyi}]\label{prop:composition_rdp}
Let $f_1:\mathcal{D}\rightarrow \mathcal{R}_1$ be $(\alpha,\epsilon_1)$-RDP and $f_2:{\mathcal{R}_1} \times \mathcal{D} \rightarrow \mathcal{R}_2$ be $(\alpha,\epsilon_2)$-RDP, then the mechanism defined as $(X_1,X_2)$, where $X_1 \sim f_1(\mathbb{D})$ and $X_2 \sim f_2(X_1,\mathbb{D})$ satisfies $(\alpha,\epsilon_1 + \epsilon_2)$-RDP.
\end{prop}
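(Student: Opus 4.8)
The plan is to work directly from the definition of R\'enyi divergence and exploit the chain-rule factorization of the joint density of $(X_1,X_2)$. Fix an arbitrary pair of neighboring datasets $\mathbb{D},\mathbb{D}'\in\mathcal{D}$ and assume $\alpha>1$ (the standard RDP regime). Let $P$ and $Q$ be the densities of $(X_1,X_2)$ when the input is $\mathbb{D}$ and $\mathbb{D}'$ respectively; write $P_1,Q_1$ for the marginal densities of $f_1(\mathbb{D}),f_1(\mathbb{D}')$, and $P_{2|1}(\cdot\mid x_1),Q_{2|1}(\cdot\mid x_1)$ for the densities of $f_2(x_1,\mathbb{D}),f_2(x_1,\mathbb{D}')$. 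Since $X_2$ is drawn conditionally on the realized value of $X_1$, the chain rule gives $P(x_1,x_2)=P_1(x_1)\,P_{2|1}(x_2\mid x_1)$ and likewise for $Q$.

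The key algebraic step I would record next is the identity $e^{(\alpha-1)D_\alpha(P\,\|\,Q)}=\mathbb{E}_{(x_1,x_2)\sim Q}\big(P/Q\big)^\alpha=\int\!\!\int P(x_1,x_2)^\alpha\,Q(x_1,x_2)^{1-\alpha}\,dx_1\,dx_2$, which is just the definition of $D_\alpha$ rewritten via $Q\,(P/Q)^\alpha=P^\alpha Q^{1-\alpha}$. Substituting the factorizations and grouping the $x_2$-dependent factors gives
\[
e^{(\alpha-1)D_\alpha(P\,\|\,Q)}=\int P_1(x_1)^\alpha Q_1(x_1)^{1-\alpha}\left(\int P_{2|1}(x_2\mid x_1)^\alpha\,Q_{2|1}(x_2\mid x_1)^{1-\alpha}\,dx_2\right)dx_1.
\]
For each fixed $x_1\in\mathcal{R}_1$, the map $\mathbb{D}\mapsto f_2(x_1,\mathbb{D})$ is $(\alpha,\epsilon_2)$-RDP, so the inner integral equals $e^{(\alpha-1)D_\alpha(f_2(x_1,\mathbb{D})\,\|\,f_2(x_1,\mathbb{D}'))}\le e^{(\alpha-1)\epsilon_2}$, where $\alpha>1$ makes the exponential monotone in the correct direction. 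This bound is uniform in $x_1$, so it can be pulled out of the outer integral, leaving $e^{(\alpha-1)\epsilon_2}\int P_1(x_1)^\alpha Q_1(x_1)^{1-\alpha}\,dx_1=e^{(\alpha-1)\epsilon_2}\,e^{(\alpha-1)D_\alpha(f_1(\mathbb{D})\,\|\,f_1(\mathbb{D}'))}\le e^{(\alpha-1)(\epsilon_1+\epsilon_2)}$, the last step using that $f_1$ is $(\alpha,\epsilon_1)$-RDP. Taking logarithms and dividing by $\alpha-1>0$ yields $D_\alpha(P\,\|\,Q)\le\epsilon_1+\epsilon_2$; since $\mathbb{D},\mathbb{D}'$ were arbitrary neighbors, $(X_1,X_2)$ is $(\alpha,\epsilon_1+\epsilon_2)$-RDP.

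The main obstacle is conceptual rather than computational: one must read the RDP hypothesis on $f_2$ as holding \emph{uniformly over every possible value of the auxiliary first argument} $x_1$, which is precisely what licenses pulling the factor $e^{(\alpha-1)\epsilon_2}$ outside the outer integral with no residual $x_1$-dependence. A secondary point of care is the measure-theoretic bookkeeping --- assuming mutual absolute continuity of the relevant output distributions so the densities and the identity $Q\,(P/Q)^\alpha=P^\alpha Q^{1-\alpha}$ are well defined --- and restricting to $\alpha>1$ so that exponentiation and division by $\alpha-1$ preserve the inequalities; the boundary case $\alpha=1$, if needed, follows by a limiting argument. Everything beyond this is the tensorization property of the R\'enyi divergence applied to the chain-rule factorization of the composed mechanism.
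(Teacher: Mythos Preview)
Your argument is correct and is essentially the standard proof of adaptive RDP composition from the cited reference~\cite{mironov2017renyi}: factor the joint density via the chain rule, bound the inner conditional R\'enyi divergence uniformly in $x_1$ using the $(\alpha,\epsilon_2)$-RDP assumption on $f_2(x_1,\cdot)$, then bound the remaining outer integral using the $(\alpha,\epsilon_1)$-RDP assumption on $f_1$. The paper itself does not supply a proof of this proposition --- it is stated as a background result imported from~\cite{mironov2017renyi} and then invoked in the proof of Theorem~\ref{theorem1} --- so there is no alternative approach in the paper to compare against; your write-up would serve as a self-contained justification of exactly the composition step the authors rely on.
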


\begin{prop}[RDP and Gaussian Mechanism~\cite{mironov2017renyi}]\label{prop:rdp_gauss_mech}
If $f$ has $\ell_2$ sensitivity 1, then the Gaussian mechanism $\mathcal{G}_{\sigma}f(\mathbb{D})=f(\mathbb{D})+e$ where $e \sim \mathcal{N}(0,\sigma^2)$ satisfies $(\alpha,\frac{\alpha}{2\sigma^2})$-RDP. Also, a composition of $T$ such Gaussian mechanisms satisfies $(\alpha,\frac{\alpha T}{2 \sigma^2})$-RDP.
\end{prop}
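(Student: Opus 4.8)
The plan is to reduce the single-mechanism claim to a closed-form computation of the R\'enyi divergence between two isotropic Gaussians that share a covariance but have shifted means, and then obtain the $T$-fold statement as an immediate corollary of the RDP composition rule (Proposition~\ref{prop:composition_rdp}).

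First I would fix an arbitrary pair of adjacent datasets $\mathbb{D}, \mathbb{D}'$ and set $\mu = f(\mathbb{D})$, $\mu' = f(\mathbb{D}')$, so that $\mathcal{G}_\sigma f(\mathbb{D}) \sim P := \mathcal{N}(\mu, \sigma^2 I_D)$ and $\mathcal{G}_\sigma f(\mathbb{D}') \sim Q := \mathcal{N}(\mu', \sigma^2 I_D)$ with $\|\mu - \mu'\|_2 \le 1$ by the $\ell_2$-sensitivity hypothesis. By the definition of RDP it suffices to show $D_\alpha(P \,\|\, Q) \le \frac{\alpha}{2\sigma^2}$ for every such pair. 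Because an isotropic Gaussian is invariant under translation and orthogonal rotation, $D_\alpha(P\|Q)$ depends on $(\mu,\mu')$ only through the scalar $t := \|\mu - \mu'\|_2$: after an orthogonal change of variables aligning $\mu - \mu'$ with the first coordinate axis, the densities factor coordinate-wise and every coordinate other than the first is identical under $P$ and $Q$, so the $\alpha$-moment $\mathbb{E}_{x\sim Q}(P(x)/Q(x))^\alpha$ collapses to the one-dimensional integral for $\mathcal{N}(t,\sigma^2)$ versus $\mathcal{N}(0,\sigma^2)$.

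Next I would carry out that scalar integral: the ratio $P(x)/Q(x)$ is the exponential of an affine function of $x$, so $(P(x)/Q(x))^\alpha e^{-x^2/2\sigma^2}$ is a Gaussian kernel, and completing the square gives $\mathbb{E}_{x\sim Q}(P(x)/Q(x))^\alpha = \exp\!\big(\tfrac{\alpha(\alpha-1)t^2}{2\sigma^2}\big)$. Dividing the logarithm by $\alpha-1$ yields $D_\alpha(P\|Q) = \tfrac{\alpha t^2}{2\sigma^2}$, which is increasing in $t$; hence the worst case over adjacent datasets is $t = 1$, giving $D_\alpha(P\|Q) \le \tfrac{\alpha}{2\sigma^2}$ and thus $(\alpha, \tfrac{\alpha}{2\sigma^2})$-RDP. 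For the composition claim I would regard the $T$ releases as a sequential composition of $T$ copies of $\mathcal{G}_\sigma f$, each $(\alpha, \tfrac{\alpha}{2\sigma^2})$-RDP by the above, and apply Proposition~\ref{prop:composition_rdp} inductively to conclude $(\alpha, \tfrac{\alpha T}{2\sigma^2})$-RDP.

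The one genuinely computational step is the scalar Gaussian $\alpha$-moment, together with the bookkeeping that the non-aligned coordinates contribute exactly a factor of $1$ in the multivariate reduction; once that identity is in hand, the reduction in the first step and the composition in the last are a direct unwinding of the definitions and of results already available in the excerpt. I therefore expect no real obstacle beyond getting the quadratic-completion constants right.
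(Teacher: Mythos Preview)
Your proof is correct and follows the standard argument from the cited reference~\cite{mironov2017renyi}: reduce to a one-dimensional R\'enyi divergence between two Gaussians with common variance and shifted means via rotational invariance, complete the square to obtain $D_\alpha = \alpha t^2/(2\sigma^2)$, maximize over $t\le 1$, and then invoke Proposition~\ref{prop:composition_rdp} for the $T$-fold statement. Note, however, that the paper does not supply its own proof of this proposition; it is stated as a background result imported from~\cite{mironov2017renyi}, so there is no in-paper argument to compare against beyond observing that your approach coincides with Mironov's original derivation.
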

%
%

%
\end{document}